\documentclass{article}

 \usepackage[preprint]{neurips_2026}

\usepackage[utf8]{inputenc} % allow utf-8 input
\usepackage[T1]{fontenc}    % use 8-bit T1 fonts
\usepackage{hyperref}       % hyperlinks
\usepackage{url}            % simple URL typesetting
\usepackage{booktabs}       % professional-quality tables
\usepackage{amsfonts}       % blackboard math symbols
\usepackage{nicefrac}       % compact symbols for 1/2, etc.
\usepackage{microtype}      % microtypography
\usepackage{xcolor}         % colors
\usepackage{multirow}
\usepackage{multicol}

\newtheorem{proposition}{Proposition}
\newtheorem{corollary}{Corollary}[proposition]
\newtheorem{remark}{Remark}
\newtheorem{proof}{Proof}
\usepackage{hyperref}

\usepackage{amsmath}
\usepackage{graphicx} 

\usepackage[textsize=tiny]{todonotes}

\title{Looped SSMs: Depth-Recurrence and Input Reshaping for Time Series Classification}

\author{%
  Mónika Farsang\textsuperscript{\rm 1,2} \\
  \And
  Ramin Hasani\textsuperscript{\rm 2,3} \\
  \And
  Daniela Rus\textsuperscript{\rm 2,3} \\
  \And
  Radu Grosu\textsuperscript{\rm 1} \\
}

\begin{document}

\maketitle
\let\thefootnote\relax\footnotetext{\textsuperscript{\rm 1}TU Wien}
\let\thefootnote\relax\footnotetext{\textsuperscript{\rm 2}MIT CSAIL}
\let\thefootnote\relax\footnotetext{\textsuperscript{\rm 3}Liquid AI}
\let\thefootnote\relax\footnotetext{\textsuperscript{\rm }Corr. author: monika.farsang@tuwien.ac.at}

\begin{abstract}
State Space Models (SSMs) are inherently recurrent along the sequence dimension, yet depth-recurrence - reusing the same block repeatedly across layers, as recently applied in looped transformers - has not been explored in this model family. We show that a looped SSM with $k$ parameters iterated $L$ times consistently closely matches or outperforms a standard SSM with $k \cdot L$ independent parameters across four architectures (LRU, S5, LinOSS, LrcSSM) and six time series classification benchmarks,
despite operating within a strictly smaller hypothesis space, as we formally establish. Since the larger model contains the looped model as a special case, this dominance cannot be explained by expressivity and instead points to parameter sharing across depth as a beneficial inductive bias that simplifies optimization. These results demonstrate that depth-recurrence is orthogonal to sequence-recurrence and independently beneficial. 
We further show that input reshaping is an equally neglected design axis: concatenating timesteps for low-dimensional inputs, or flattening and rechunking the joint feature-time dimension for high-dimensional ones, yields accuracy gains of 1-6\% across all models, confirmed over 5 random seeds. Both techniques provide standalone improvements that compound when combined, suggesting that depth and input reshaping are two independent and underexplored design axes for SSMs on time series.

\end{abstract}

\section{Introduction}
State Space Models (SSMs) have emerged as a compelling alternative to attention-based transformers for tasks requiring efficient processing of long sequences \citep{gu2022efficientlymodelinglongsequences, hasani2022liquid, orvieto2023resurrecting}. By maintaining a compact recurrent state updated across the sequence dimension, SSMs achieve linear-time learning and constant-time inference complexity, and strong empirical performance, across a wide range of domains, including language, audio, and time series \citep{gu2023mamba, rusch2024oscillatory, farsang2025parallelizationnonlinearstatespacemodels}. This native sequence-level recurrence is central to their identity - yet it has also implicitly constrained how researchers think about scaling them up. The general design pattern is simple, by usually stacking $L$ independent SSM blocks, each with its own parameters, and letting depth provide representational diversity.

A parallel line of work on transformers has recently challenged the analogous assumption made for the transformers architecture. Looped transformers \citep{dehghani2018universal, geiping2025scaling, zhu2025scalinglatentreasoninglooped} reuse the same block repeatedly across depth, showing that a model with $k$ parameters iterated $L$ times can match or outperform a standard model with $k \cdot L$ independent parameters. This finding has driven a reappraisal of what depth actually contributes to transformer computation, shifting the narrative from \emph{more parameters} to \emph{more iterative refinement}. However, despite this growing body of work, this question has never been explored in the context of SSMs. This omission is understandable: SSMs are already recurrent, so adding another axis of recurrence may seem redundant. We argue the opposite: \emph{depth-recurrence} is precisely \emph{orthogonal} to sequence-recurrence, and this orthogonality makes depth-recurrence an independent and unexplored source of inductive bias for SSMs.

Beyond depth recurrence, a second underexplored architectural dimension in SSMs is input reshaping. The way raw time series are shaped into the input embeddings - how many time steps are grouped together, and whether the feature and temporal dimensions are kept separate - directly determines the information density and sequence length the model sees. However, this topic has received no attention compared to other architectural choices. In vision, the analogous question of how to tokenize images has been studied extensively: ViT \citep{dosovitskiy2020image} introduced fixed-size patch embeddings that deliberately mix spatial positions within a token, and subsequent work has shown that a varying patch size, overlap, and ordering can significantly affect performance \citep{liu2021swin, beyer2023flexivit}. Here, we introduce \emph{input reshaping}, a simple pre-processing step adjusting the information density of input data, and study its effects on both low- and high-dimensional input data.

The interaction between input dimensionality, sequence length, and model capacity for SSMs on time series remains poorly understood.
This work addresses both gaps simultaneously. We apply depth-recurrence to four representative SSM architectures: LRU \citep{orvieto2023resurrecting}, S5 \citep{smith2023simplified}, LinOSS \citep{rusch2024oscillatory}, and LrcSSM \citep{farsang2025parallelizationnonlinearstatespacemodels}, across six time series classification benchmarks spanning input dimensionalities of 2-63 and sequence lengths of 400-18,000. We also show how input reshaping positively affects the performance of these models and how, in combination with depth recurrence, it further improves SSM accuracy. %TODO: check for exact numbers

Our central empirical finding is that a looped SSM with $k$ parameters iterated $L$ times consistently outperforms a standard SSM with $k \cdot L$ independent parameters. We formally establish that the latter contains the former as a special case, meaning that the larger model has strictly more expressive power; this dominance points to parameter sharing across depth as a beneficial inductive bias.

\paragraph{Contributions.} Our main contributions in this paper are as follows:
\begin{itemize}
    \item \emph{We introduce depth-recurrence to SSMs}, showing for the first time 
    that reusing the same block repeatedly across depth is complementary to their native sequence-level recurrence. We systematically evaluate three parameter-sharing strategies under two supervision schemes (final layer and block-wise loss) across four architectures and six benchmarks.

    \item \emph{We show empirically that a looped SSM} with $k$ parameters iterated 
    $L$ times consistently matches or outperforms a standard SSM with $k 
    \cdot L$ independent parameters, despite having as little as $1/L$ 
    of its parameters, and formally establish that this advantage cannot 
    be attributed to greater expressivity, identifying parameter sharing 
    as a beneficial inductive bias.

    \item \emph{We introduce input reshaping}, a parameter-free preprocessing 
    strategy that adapts the information density of the input to the model: 
    concatenating consecutive timesteps for low-dimensional inputs, and 
    flattening and rechunking the joint feature-time dimension for 
    high-dimensional ones, yielding accuracy gains of $1-6\%$ across all 
    four architectures.

\end{itemize}

\section{Related Work}
\paragraph{State Space Models for Sequence Modeling.}
Structured State Space Models (SSMs) emerged as a scalable alternative to attention-based transformers for long-range sequence modeling. S4 \citep{gu2022efficientlymodelinglongsequences} introduced a parameterization of linear SSMs via the HiPPO framework, achieving strong performance on the Long Range Arena benchmark. S5 \citep{smith2023simplified} streamlined this design by replacing the bank of independent SISO systems with a single MIMO SSM solved via parallel scan. The Linear Recurrent Unit (LRU) \citep{orvieto2023resurrecting} demonstrated that careful initialization and normalization of a simple diagonal linear RNN is sufficient to match deep SSMs without any continuous-time motivation. LinOSS \citep{rusch2024oscillatory} introduced forced second-order linear ODE dynamics, drawing inspiration from cortical oscillations and achieving strong performance on time series tasks. 
%Liquid-S4 \citep{} and Mamba \citep{gu2023mamba} introduced input-dependent SSM parameters via a selective scan mechanism, bringing content-based reasoning to the SSM family. 
Most recently, LrcSSM \citep{farsang2025parallelizationnonlinearstatespacemodels} extended this line to nonlinear recurrent dynamics by forcing a diagonal Jacobian structure, enabling parallel training while retaining nonlinear expressivity.  All of these works focus on the design of individual layers stacked independently; we instead ask what happens when these layers are reused across depth.

\paragraph{Depth-Recurrent and Looped Transformers.}
The idea of reusing parameters across depth dates back to the Universal Transformer \citep{dehghani2018universal}, which applied the same block repeatedly with adaptive computation time. This line of work has seen a major revival in the context of test-time scaling. \citet{geiping2025scaling} trained a 3.5B-parameter model with a fixed recurrent block iterated up to 32 times at inference, showing that effective depth can be scaled independently of parameter count. \citet{saunshi2025reasoning} formally connected looping to the generation of latent thoughts, demonstrating that looped transformers can match much deeper non-looped models on reasoning tasks. Ouro \citep{zhu2025scalinglatentreasoninglooped} scaled this approach to 7.7 trillion training tokens, with 1.4B and 2.6B parameter looped models matching 4B and 8B standard transformers. Parcae \citep{prairie2026parcae} addressed training instability in looped models and established scaling laws for stable looped architectures. Mechanistic analyses \citep{pappone2025two, blayney2026mechanistic} have studied the geometry of iterates inside looped blocks, finding that updates perform fine-grained latent refinements rather than coarse directional pushes. Critically, all of this work applies exclusively to transformer blocks. We extend the depth-recurrence paradigm to SSMs for the first time, showing it is compatible with and complementary to their native sequence-level recurrence.

% \paragraph{SSMs for Time Series Classification.}
%maybe

\paragraph{Input Reshaping for Time Series.}
How time series data is shaped before being fed to a sequence model is known to significantly affect performance, yet it remains underexplored relative to architectural design. The closest analogy comes from vision, where the question of how to convert a structured 2D signal into a sequence of inputs has received substantial attention. ViT \citep{dosovitskiy2020image} introduced the now-standard practice of partitioning images into fixed-size patches, flattening each patch into a single vector - a process that deliberately mixes spatial positions within a local neighborhood. Subsequent work has explored varying patch sizes \citep{beyer2023flexivit}, overlapping patches, and hierarchical patch structures \citep{liu2021swin}, all of which alter the tradeoff between sequence length, per-step information density, and the range of dependencies the model must capture. Crucially, none of these strategies treat the two spatial dimensions of an image as categorically inviolable - pixels from different rows and columns are mixed within a single patch. 

Our temporal reshaping strategy draws on the same intuition and extends it to time series. For low-dimensional inputs, we concatenate consecutive timesteps into denser input vectors, analogously to increasing patch size along the time axis. For high-dimensional inputs, we flatten and rechunk the joint feature-time dimension into fixed-size input vectors, mixing feature and temporal identity within each step - analogously to flattening a 2D image into a 1D sequence before patching, where row and column identity is lost but local structure is preserved. In the time series domain, aggregating consecutive timesteps has been explored for transformers in PatchTST \citep{nie2022time}, which reduces sequence length and increases local context, and channel-mixing versus channel-independence has been identified as a key axis of variation in forecasting models \citep{han2024capacity}. For SSMs specifically, the interaction between input dimensionality, sequence length, and model capacity has not been systematically studied. The consistent empirical benefit of our feature-time mixing strategy across all four SSM architectures on high-dimensional benchmarks suggests that the conventional separation of feature and temporal dimensions is not a necessary inductive bias for SSMs - mirroring the finding in vision that spatial structure can be profitably disrupted at the input reshaping stage.

\section{Looped SSMs with Input Reshaping}

\subsection{Setup and Notation of Looped SSMs}

Let $\mathcal{X} = \mathbb{R}^{T \times d}$ denote the space of input time series of length $T$ 
with $d$ input channels. An SSM block $f_\theta : \mathbb{R}^{T \times d} \to \mathbb{R}^{T \times d}$ 
with parameters $\theta \in \Theta_k$ (where $|\theta| = k$) is a map consisting of a state 
recurrence followed by a nonlinear projection. Depending on the architecture, the state recurrence 
may be linear (as in LRU~\citep{orvieto2023resurrecting} and S5~\citep{smith2023simplified}), forced linear second-order ODEs (as in LinOSS~\citep{rusch2024oscillatory}), or nonlinear 
(as in LrcSSM~\citep{farsang2025parallelizationnonlinearstatespacemodels}). All architectures share the common structure of a parameterized 
depth-wise composition.% studied here.

\paragraph{Independent SSM.}
A \emph{standard depth-$L$ SSM} is the composition of $L$ independent blocks:
\begin{equation}
    F^{\text{ind}}_{(\theta_1, \dots, \theta_L)} = f_{\theta_L} \circ \cdots \circ f_{\theta_1}, 
    \quad \theta_i \in \Theta_k \text{ independently},
\end{equation}
with total parameter count $k \cdot L$.

\paragraph{Looped SSM.}
A \emph{looped SSM} with $L$ iterations is the $L$-fold composition of a single shared block:
\begin{equation}
    F^{\text{loop}}_{\theta} = \underbrace{f_\theta \circ \cdots \circ f_\theta}_{L \text{ times}}, 
    \quad \theta \in \Theta_k,
\end{equation}
with total parameter count $k$.

\begin{figure}
    \centering
    \includegraphics[width=0.6\linewidth]{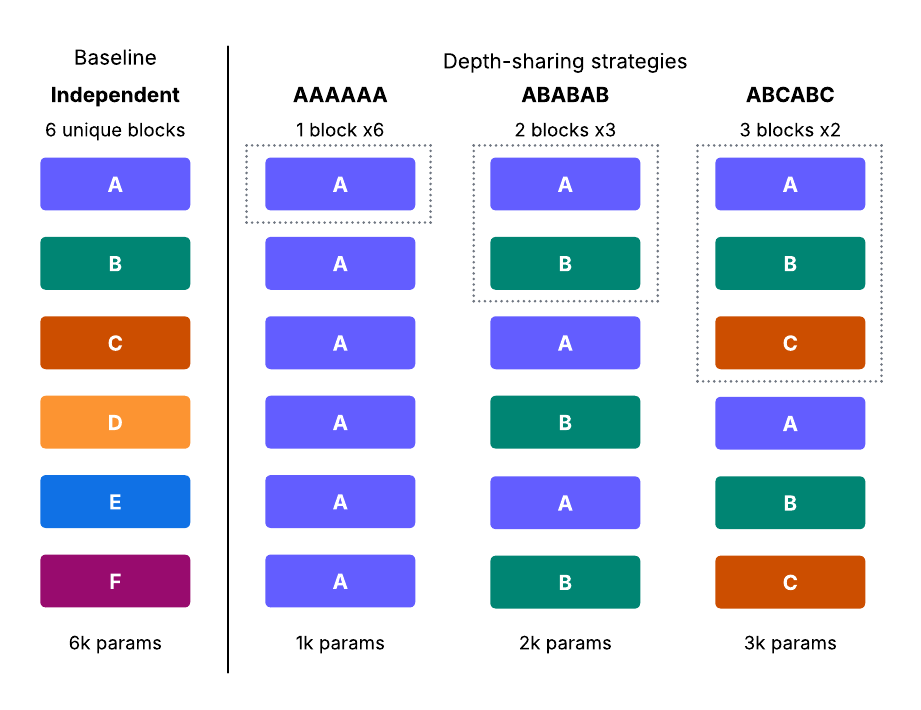}
    \qquad
    \includegraphics[width=0.18\linewidth]{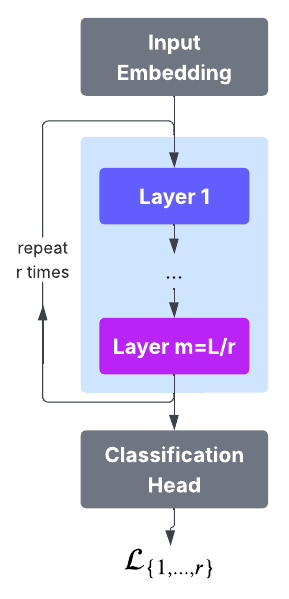}
    \caption{Left: Architecture setup of six independent unique layers \texttt{ABCDEF}, compared with different strategies to introduce recursive-depth into the architecture. Right: Looped architectures \texttt{AAAAAA}, \texttt{ABABAB}, and \texttt{ABCABC} with a classification head. Based on the supervision strategy, we either take only the final loss $\mathcal{L_r}$ value or the mean of all the losses of the repeated block.}
    \label{fig:placeholder}
\end{figure}

\paragraph{Partially Shared SSM.}
Let $1 \leq m \leq L$ be a divisor of $L$, and let $r = L/m$ be the number of repetitions. 
A \emph{partially shared SSM} with $m$ unique blocks each repeated $r$ times is:
\begin{equation}
    F^{\text{partial}}_{(\theta_1, \dots, \theta_m)} = 
    \underbrace{(f_{\theta_m} \circ \cdots \circ f_{\theta_1}) \circ \cdots \circ 
    (f_{\theta_m} \circ \cdots \circ f_{\theta_1})}_{r \text{ repetitions}},
\end{equation}
with total parameter count $k \cdot m$. The fully independent model $F^{\text{ind}}$ and the 
fully looped model $F^{\text{loop}}$ correspond to $m = L$ and $m = 1$ respectively, with 
$F^{\text{partial}}$ interpolating between them. In our experiments with $L = 6$, we consider 
$m \in \{1, 2, 3\}$, corresponding to the \texttt{AAAAAA}, \texttt{ABABAB}, and \texttt{ABCABC} 
sharing patterns, where \texttt{A}, \texttt{B}, and \texttt{C} denote SSM blocks with different parameters.

\paragraph{Supervision Strategies.}
Let $g_\phi : \mathbb{R}^{T \times d} \to \mathcal{Y}$ be a classification head with parameters 
$\phi$. Given a looped or partially shared SSM with $r$ repetitions of a block of $m$ layers, 
let $h^{(j)}$ denote the representation after the $j$-th repetition, for $j = 1, \dots, r$. We consider two supervision strategies:
\begin{itemize}
    \item \textbf{Final supervision:} the loss is computed only at the output of the final 
    repetition:
    \begin{equation}
        \mathcal{L}^{\text{final}} = \mathcal{L}\bigl(g_\phi(h^{(r)}),\, y\bigr).
    \end{equation}
    \item \textbf{Block-wise supervision:} the classification head is applied after each full 
    repetition of the $m$-layer block considered:
    \begin{equation}
        \mathcal{L}^{\text{block}} = \frac{1}{r} \sum_{j=1}^{r} \mathcal{L}\bigl(g_\phi(h^{(j)}),\, y\bigr).
    \end{equation}
    %These are not any better
    % \item \textbf{Layer-wise supervision:} the classification head is applied after every 
    % individual layer,
    % \begin{equation}
    %     \mathcal{L}^{\text{layer}} = \frac{1}{L} \sum_{l=1}^{L} \ell\bigl(g_\phi(h^{(l)}),\, y\bigr).
    % \end{equation}
\end{itemize}
In the block-wise case, the classification head $g_\phi$ is shared across all 
supervision points. This strategy differs in how densely gradient information is propagated through the depth dimension. This is why we investigate this approach, as it may interact differently with the degree of parameter sharing.

\subsection{Input Reshaping}
Figure~\ref{fig:reshaping} illustrates the reshaping strategy. Both low- and high-dimensional inputs are controlled by a single concentration factor $c$, which determines how many original scalar values are merged into a single input vector. For low-dimensional inputs of shape $(T, d)$, full consecutive time steps are concatenated along the feature axis, yielding a reshaped sequence of shape $(\lceil Td/c \rceil,\, c)$: the sequence becomes $c$ times shorter while each input vector becomes $c$ times wider, increasing per-step information density without mixing feature and temporal identity across non-adjacent time steps. 

For high-dimensional inputs of shape $(T, D)$, the entire feature-time grid is first flattened into a vector of length $T \cdot D$ and then rechunked into fixed-size vectors of width $c$, producing a sequence of shape $(\lceil TD/c \rceil,\, c)$: this deliberately mixes feature and temporal identity within each input vector, analogously to flattening a 2D image before patching. 

Where $T \cdot d$ or $T \cdot D$ is not divisible by $c$, zero-padding is applied to the final vector. In both cases, $c = 1$ recovers the original input without modification.
Note that for low-dimensional inputs, 
choosing $c$ that is not a multiple of $d$ also induces feature-temporal mixing, but we restrict $c$ to multiples of $d$ for low-dimensional inputs, as we found such mixing unnecessary in this regime.

\begin{figure}
    \centering
    \includegraphics[width=0.4\linewidth]{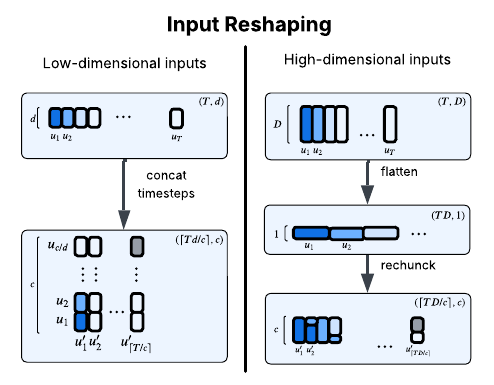}
    \caption{{Illustration of the two input reshaping strategies. 
    A concentration factor $c$ controls the rechunking: a low-dimensional input of shape $(T, d)$ is reshaped to $(\lceil Td/c \rceil, c)$, where $c$ is a multiplicative of $d$, while a high-dimensional input of shape $(T, D)$ is flattened and rechunked to 
    $(\lceil TD/c \rceil, c)$, with zero-padding applied where necessary.}}
    \label{fig:reshaping}
    \vspace*{-3ex}
\end{figure}

% -----------------------------------------------------------------------
\subsection{Theoretical Analysis}\label{sec:theoretical}
We provide a formal analysis of the hypothesis classes induced by the three sharing strategies in Appendix~\ref{app:theoretical}, establishing that 
$\mathcal{F}^{\text{loop}} \subseteq \mathcal{F}^{\text{partial}}_m 
\subseteq \mathcal{F}^{\text{ind}}$ for any divisor $m$ of $L$ with $m \mid L$. 
This implies that any empirical advantage of the looped model over the independent baseline cannot be attributed to a greater expressivity of the looped model, but must 
instead reflect the inductive bias introduced by parameter sharing across depth in the looped model.

\section{Empirical Results}\label{sec:results}

\subsection{Datasets}
We evaluate the three versions of Looped SSMs on six multivariate time-series classification datasets from the UEA Multivariate Time Series Classification Archive~\cite{bagnall2018uea}. All datasets consist of biologically- or physiologically-grounded time-series, derived from real-world measurements, capturing continuous temporal signals, such as neural activity, bodily motion, or spectroscopic readings. More details of the datasets are given in Table~\ref{tab:dataset_details} in the Appendix.

These datasets have become a standard benchmark for evaluating the long-range-dependency modeling capabilities of sequential models nowadays, and they have been used across a broad range of recent works on recurrent networks, state-space models, and beyond~\citep{walker2024log, moreno2024rough, nzoyem2025weight, rusch2024oscillatory, boyer2025learning, pourcel2026learning, karuvally2026bridging}. The dataset splits in 70/15/15 for training, validation, and test sets, respectively, are generated using the same pre-defined random seeds.

% \subsection{Baseline Models}
% TODO: \textcolor{red}{architecture, tuning etc. }
\subsection{Experimental Setup}
\label{sec:expsetup}%

Previous works~\citep{walkerlog, rusch2024oscillatory, farsang2025parallelizationnonlinearstatespacemodels} have approached the SSM evaluation through extensive hyperparameter 
search over its orthogonal architectural dimensions, by varying the number of layers in 
$\{2, 4, 6\}$, the SSM state size in $\{16, 64, 256\}$, and the hidden size 
in $\{16, 64, 128\}$. While this yields strong per-model, per-dataset results, it is rarely intuitive how different datasets respond to these architectural 
choices, and the resulting best configurations are often inconsistent across 
SSM types, making it difficult to draw principled conclusions about the 
relative merits of the different models. 

In this paper, we therefore propose a complementary 
evaluation protocol: rather than optimizing the architecture separately for 
each model and dataset, we fix a single medium-sized architecture for all 
experiments, with a state size of $64$, a hidden dimension of $64$, and $6$ 
layers. These selected dimensions are neither the 
smallest nor the largest in the search space, representing a reasonable and 
broadly capable architecture, while $6$ layers allows for a thorough investigation of depth-recurrence. Under this fixed architecture, we tune the learning rate over $\{10^{-3}, 10^{-4}, 10^{-5}\}$, allowing us to isolate and analyze the effect of recurrence across models, and to investigate how depth-recursive structure contributes to performance independently of architecture-specific tuning.

We evaluate four SSM models: S5~\citep{smith2023simplified}, 
LRU~\citep{orvieto2023resurrecting}, LinOSS~\citep{rusch2024oscillatory}, 
and LrcSSM~\citep{farsang2025parallelizationnonlinearstatespacemodels}. 
These models span a wide range of recurrence structures: S5 and LRU use a linear state 
recurrence, LinOSS is based on forced linear second-order ODEs, and LrcSSM 
employs a nonlinear state recurrence with diagonal Jacobian. All four models share in our experiments the same architecture and hyperparameters, as described above.

\subsection{Looped SSMs}\label{sec:looped_ssm}
% (results without reshaping)
% Here, we want to show that we can save parameter count and preserve/enhance prediction accuracy.

Table~\ref{tab:SSM_looping} reports results for depth-recursion across four architectures and six benchmarks. As established in Section~\ref{sec:theoretical}, looped SSMs operate within a strictly smaller hypothesis space than the independent baseline, so matching its performance already constitutes a meaningful result: The same accuracy is achieved with fewer parameters. Any improvement beyond the baseline is thus unexpected as the more constrained model has no reason to win on expressivity grounds, and yet it does. With this in mind, the results are broadly encouraging. 

%\textcolor{red}{Add more stuff once we have the final runs everywhere}
Depth-recursion achieves performance comparable to and often exceeding the independent baseline across most datasets and models, despite using significantly fewer parameters. The \texttt{AAAAAA} configuration provides the most consistent gains, supporting the idea that strong parameter sharing is an effective inductive bias, while \texttt{ABABAB} and \texttt{ABCABC} are more variable but still competitive. The Heartbeat, Motor, and Worms datasets show substantial improvements, whereas on Ethanol and SCP2's results are more mixed. Overall, these findings demonstrate that looped SSMs can match or surpass baseline performance while retaining a substantial parameter efficiency advantage.

\begin{table}
    \centering
    \caption{Accuracy (\%) on six time series classification benchmarks for four SSM 
    time-recurrence models, comparing the independent 6-layer baseline (\texttt{ABCDEF}) 
    against three depth-recurrence configurations (\texttt{AAAAAA}, \texttt{ABABAB}, 
    \texttt{ABCABC}) under both final-layer supervision ($\mathcal{L}^{\text{final}}$) 
    and block-wise supervision ($\mathcal{L}^{\text{block}}$). The results are averaged over 5 seeds. \underline{Underlined}: depth-recurrence model is comparable to the baseline, \textbf{Bold}: depth-recurrence model outperforms the baseline.}
    \label{tab:SSM_looping}
    \resizebox{\textwidth}{!}{%
    \begin{tabular}{c|c|c|cccccc}
    \toprule
     &  & Baseline & \multicolumn{6}{c}{Depth-recursion}\\
    Dataset & Model & \texttt{ABCDEF} & \multicolumn{2}{c}{\texttt{AAAAAA}} & \multicolumn{2}{c}{\texttt{ABABAB}} & \multicolumn{2}{c}{\texttt{ABCABC}}\\
    && $\mathcal{L}^{\text{final}}$& $\mathcal{L}^{\text{final}}$ & $\mathcal{L}^{\text{block}}$ & $\mathcal{L}^{\text{final}}$ & $\mathcal{L}^{\text{block}}$ & $\mathcal{L}^{\text{final}}$ & $\mathcal{L}^{\text{block}}$ \\
    \midrule
    \multirow{4}{*}{Heartbeat}
        & S5  & $74.19 \pm 4.67$  & $\mathbf{75.16 \pm 3.90}$ & $72.90 \pm 5.24$ & $73.55 \pm 2.99$ & $73.87 \pm 4.9$ & $\mathbf{76.13 \pm 3.29}$ & $73.87 \pm 1.58$\\
        & LRU & $71.29 \pm 1.21$ & $\mathbf{72.58 \pm 4.89}$ & $\mathbf{72.90 \pm 5.34}$ & $70.97 \pm 1.77$ & $70.32 \pm 4.16$ & $\mathbf{72.90 \pm 4.00}$ & $70.97 \pm 4.08$\\
        & LinOSS & $70.16 \pm 1.80$ & $\mathbf{73.23 \pm 4.85}$ & $\mathbf{70.32 \pm 2.62}$ & $\mathbf{72.58 \pm 4.45}$  & $\mathbf{73.87 \pm 4.49}$  & $\mathbf{73.55 \pm 3.16}$ & $\mathbf{72.58 \pm 5.20}$\\
        & LrcSSM & $74.67 \pm 5.2$ & $\mathbf{78.33 \pm 5.00}$ & $\mathbf{75.67 \pm 6.19}$ & $\mathbf{76.33 \pm 4.77}$ & $\mathbf{77.00 \pm 5.58}$ & $\mathbf{75.67 \pm 3.46}$ & $\mathbf{76.67 \pm 5.00}$\\
    \cmidrule{1-9}
    \multirow{4}{*}{SCP1}
        & S5  & $90.12 \pm 2.05$ & $88.94 \pm 1.91$ & $86.59 \pm 4.05$ & $89.65 \pm 2.28$ & $88.71 \pm 2.64$ & $88.24 \pm 1.97$ & $87.29 \pm 2.62$\\
        & LRU & $85.65 \pm 2.92$ & $\mathbf{85.65 \pm 2.28}$ & $84.94 \pm 2.72$  & $\mathbf{86.35 \pm 0.94}$ & $\mathbf{85.65 \pm 4.85}$ & $85.41 \pm 4.25$ & $84.94 \pm 3.28$\\
        &LinOSS & $85.88 \pm 4.27$ & $\underline{85.41 \pm 1.20}$ & $\mathbf{86.12 \pm 3.82}$ & $83.76 \pm 4.36$ & $84.47 \pm 1.37$ & $\underline{85.18 \pm 0.12}$ & $83.76 \pm 3.89$\\
        & LrcSSM & $84.52 \pm 4.29$ & $\mathbf{86.19 \pm 2.99}$ & $\mathbf{84.52 \pm 3.57}$ & $84.05 \pm 2.87$ & $\mathbf{84.52 \pm 3.26}$ & $\mathbf{85.71 \pm 4.61}$ & $\mathbf{85.00 \pm4.95}$  \\
    \cmidrule{1-9}
    \multirow{4}{*}{SCP2}
        & S5  & $54.74 \pm 5.25$ & $\mathbf{57.54 \pm 3.40}$ & $\mathbf{57.89 \pm 7.36}$ & $\mathbf{55.09 \pm 6.43}$  & $\mathbf{55.09 \pm 7.74}$ & $\mathbf{54.74 \pm 4.06}$ & $\mathbf{56.14 \pm 4.57}$\\
        & LRU & $55.09 \pm 5.04$ & $\mathbf{56.14 \pm 3.33}$ & $54.04 \pm 7.56$ & $53.68 \pm 5.04$ & $55.09 \pm 9.32$  & $54.39 \pm 5.32$ & $\mathbf{56.49 \pm 6.96}$\\
        & LinOSS & $54.39 \pm 6.37$ & $51.58 \pm 4.09$ & $\mathbf{54.74 \pm 7.48}$ & $\mathbf{54.39 \pm 3.51}$ & $49.47 \pm 6.22$ & $\mathbf{55.09 \pm 6.89}$ & $\mathbf{55.79 \pm 2.05}$\\
        & LrcSSM & $\mathbf{57.14 \pm 5.79}$ & $50.71 \pm 6.75$ & $50.89 \pm 2.31$ & $54.64 \pm 9.99$  & $55.00 \pm 6.96$ & $56.43 \pm 6.00$ & $50.36 \pm 8.51$\\
    \cmidrule{1-9}
    \multirow{4}{*}{Ethanol}
        & S5  & $19.24 \pm 6.95$ & $\mathbf{19.75 \pm 7.14}$ & $\mathbf{20.00 \pm 7.48}$ & $\mathbf{19.75 \pm 7.14}$ & $\mathbf{19.24 \pm 6.95}$ & $\mathbf{19.24 \pm 6.95}$ & $\mathbf{19.49 \pm 6.68}$\\
        & LRU  & $27.09 \pm 3.63$ & $\mathbf{27.34 \pm 3.81}$ & $\mathbf{27.09 \pm 3.26}$ & $\mathbf{27.34 \pm 3.81}$ & $26.84 \pm 3.53$ & $\mathbf{27.34 \pm 3.81}$ & $26.84 \pm 3.53$\\
        & LinOSS & $\mathbf{23.73 \pm 7.82}$  & $19.24 \pm 6.72$ & $19.49 \pm 6.68$ & $18.99 \pm 6.55$ & $19.24 \pm 6.95$ & $19.24 \pm 6.95$ & $19.49 \pm 8.19$\\
        & LrcSSM & $\mathbf{36.05 \pm 2.00}$ & $33.95 \pm 4.87$ & $34.21 \pm 5.96$ & $33.68 \pm 4.32$ & $31.58 \pm 5.34$ & $33.42 \pm 4.98$ & $34.21 \pm 4.46$\\
    \cmidrule{1-9}
    \multirow{4}{*}{Motor}
        & S5  & $54.39 \pm 5.77$  & $\mathbf{55.09 \pm 3.06}$ & $54.04 \pm 11.12$ & $53.68 \pm 5.16$ & $53.68 \pm 11.13$ & $\mathbf{55.79 \pm 5.91}$ & $\mathbf{54.39 \pm 10.05}$\\
        & LRU  & $56.14 \pm 6.37$ & $\mathbf{57.89 \pm 6.93}$ & $51.93 \pm 5.04$ & $\mathbf{56.84 \pm 5.50}$ & $55.79 \pm 2.81$ & $\mathbf{56.49 \pm 6.79}$ & $\mathbf{56.14 \pm 3.84}$\\
        & LinOSS & $49.12 \pm 2.77$ & $\mathbf{51.93 \pm 6.34}$ & $\mathbf{51.23 \pm 4.76}$ & $\mathbf{56.49 \pm 4.35}$ & $\mathbf{54.04 \pm 2.33}$  & $\mathbf{55.09 \pm 2.11}$ & $\mathbf{53.33 \pm 7.24}$\\
        & LrcSSM & $57.50 \pm 3.43$ & $\mathbf{57.86 \pm 3.91}$ & $53.57 \pm 5.50$  & $56.43 \pm 5.30$ & $\mathbf{58.21 \pm 6.13}$ & $55.71 \pm 2.33$ & $55.71 \pm 6.85$\\
    \cmidrule{1-9}
    \multirow{4}{*}{Worms}
        & S5  & $86.67 \pm 5.10$ & $\mathbf{89.44 \pm 4.08}$ & $\mathbf{87.78 \pm 5.44}$  & $85.56 \pm 3.24$ & $82.78 \pm 6.19$ & $83.89 \pm 5.39$ & $84.44 \pm 5.15$\\
        & LRU & $88.33 \pm 4.08$ & $\mathbf{90.56 \pm 5.15}$ & $\mathbf{90.00 \pm 1.36}$ & $87.78 \pm 4.84$  & $\mathbf{90.00 \pm 3.77}$ & $87.78 \pm 7.16$ & $\mathbf{89.44 \pm 4.78}$ \\
        & LinOSS & $\mathbf{95.83 \pm 3.11}$ & $90.00 \pm 6.48$  & $93.89 \pm 4.78$ & $83.33 \pm 5.56 $ & $91.67 \pm 7.24$ & $82.78 \pm 11.3$ & $87.22 \pm 13.90$\\
        & LrcSSM & $83.33 \pm 5.56$ & $\mathbf{85.00 \pm 5.41}$ & $\mathbf{85.56 \pm 3.62}$ & $\mathbf{86.67 \pm 7.19}$ & $81.67 \pm 5.76$ & $\mathbf{83.89 \pm 7.71}$ & $82.22 \pm 8.0$\\
    \bottomrule
    \end{tabular}
    }
\vspace*{-1ex}
\end{table}

\subsection{Input Reshaping}
% (effect of the input reshaping only)

% unified input representations

% instead of tuning the architecture hyperparameters, better input representations help 

Beyond depth-recursion, we investigate whether reshaping the input (by concatenating multiple timesteps or rechunking the sequence) can serve as an additional axis of performance improvement within a fixed architecture. The motivation is to achieve a more uniform representation of the data, reducing reliance on exhaustively tuning architectural dimensions such as hidden size, state space size, and number of layers that characterizes much of the prior work in this area. 

In Table~\ref{tab:input_reshaping}, all comparisons are performed within the same architecture and include learning rate tuning, isolating the effect of input reshaping from any architectural advantage. For the low-dimensional Ethanol dataset ($d=2 \rightarrow c=8$), concatenating multiple timesteps boosts performance across all models, with S5 achieving a gain of over 5\% in accuracy. For the high-dimensional datasets ($D=61 \rightarrow c=8$ for Heartbeat and $D=63 \rightarrow c=8$ for Motor), feature and time-mixing enhances performance in 3 out of 4 models in both the Motor and Heartbeat cases, with S5 gaining more than 6\% on Motor, and LRU improving by over 4\% on Heartbeat. 
Medium-dimensional datasets (SCP1, Worms, SCP2) show no substantial gains, suggesting that their input dimensionality is already well-suited to the given architecture; results are reported in Table~\ref{tab:input_reshape_medium} in the Appendix.

These findings suggest that input reshaping is a complementary and often overlooked axis, where one can gain considerably better performance, without modifying the underlying architecture at all.

\begin{table}
    \centering
        \caption{Effect of input reshaping on classification accuracy (\%) across different time-recurrence models and datasets. 
    For each model, \textit{Baseline} denotes the fixed architecture with learning rate tuning, 
    while \textit{Reshaped} applies concatenation of multiple timesteps in Ethanol, and 
    feature and time-mixing in Motor and Heartbeat, before feeding into the same architecture of \texttt{ABCDEF}. 
    The results are averaged over five seeds, with standard deviations reported. 
    \underline{Underlined}: feeding in reshaped input is comparable to the baseline, \textbf{Bold}: feeding in reshaped input outperforms the baseline. 
    }
    \label{tab:input_reshaping}
    \resizebox{\textwidth}{!}{%
    \begin{tabular}{c|cc|cc|cc|cc}
    \toprule
        & \multicolumn{2}{c}{LRU}& \multicolumn{2}{c}{S5} & \multicolumn{2}{c}{LinOSS} & \multicolumn{2}{c}{LrcSSM} \\
         & Baseline & Reshaped & Baseline & Reshaped & Baseline & Reshaped & Baseline & Reshaped\\
         \midrule
         Ethanol  & $27.09 \pm 3.63$ & $\mathbf{27.59 \pm 6.12}$ & $19.24 \pm 6.95$ & $\mathbf{24.56 \pm 4.64}$ & $23.73 \pm 7.82$ & $\mathbf{24.56 \pm 4.64}$ & $36.05 \pm 2.00$ & $\mathbf{39.21 \pm 3.00}$ \\
         \midrule
        % SCP1  & $85.65 \pm 2.92$ & $\mathbf{86.59 \pm 4.74}$ & $90.12 \pm 2.05$ & $88.94 \pm 1.19$ & $85.88 \pm 4.27$ & $85.18 \pm 4.12$  &$84.52 \pm 4.29$ & $\mathbf{84.76 \pm 1.00}$\\
        % Worms  & $88.33 \pm 4.08$ & $\mathbf{88.89 \pm 6.33}$ & $86.67 \pm 5.10$ & $83.89 \pm 4.44$ & $95.83 \pm 3.11$ & $70.00 \pm 31.93$ & $83.33 \pm 5.56$ & $80.00 \pm 7.19$ \\
        % SCP2  & $55.09 \pm 5.04$ & $\mathbf{55.09 \pm 3.61}$ & $54.74 \pm 5.25$ & $\mathbf{55.09 \pm 3.94}$ & $54.39 \pm 6.37$ & $\mathbf{56.49 \pm 5.59}$ & $57.14 \pm 5.79$ & $51.07 \pm 1.60$ \\
        % \midrule
        Motor &  $56.14 \pm 6.37$ & $\mathbf{58.60 \pm 6.04}$ & $54.39 \pm 5.77$  & $\mathbf{61.75 \pm 5.25}$ & $49.12 \pm 2.77$ & $\mathbf{52.28 \pm 6.96}$ & $57.50 \pm 3.43$ & $57.14 \pm 8.66$ \\
        Heartbeat  & $71.29 \pm 1.21$ & $\mathbf{75.48 \pm 2.37}$ & $74.19 \pm 4.67$  & $\underline{74.19 \pm 3.38}$ & $70.16 \pm 1.80$ & $\mathbf{70.65 \pm 6.40}$ & $74.67 \pm 5.2$ & $\mathbf{77.67 \pm 6.08}$ \\
        \bottomrule
    \end{tabular}
    }
    \vspace*{-3ex}
\end{table}

% Notes:
% Low-dimensional ($d=2$, Ethanol) dataset: concatenating multiple timesteps boost performance across all models. It is more than 5\% accuracy gain in S5.

% High-dimensional ($D>60$, Motor and Heartbeat): feature and time-mixing enhances performance in 3 out of 4 models in both cases. For the Motor dataset, it is a more than 6\% performance increment for S5, and for the Heartbeat, it is a more than 4\% accuracy improvement in the case of LRU.

\subsection{Looped SSMs with Input Reshaping}
% (both combined)
% TODO \textcolor{red}{decide how much to include here}
%plan: only include low and high dimensional data, the rest can go to the appendix

Table~\ref{tab:SSM_looped_reshaped} reports results when depth-recursion is applied on top of input reshaping. Since reshaping already improves the independent 
baseline (Table~\ref{tab:input_reshaping}), the reshaped \texttt{ABCDEF} model constitutes a stronger starting point, making further gains from depth-recursion 
harder to achieve. Nevertheless, the recursive configurations continue to closely match or exceed this stronger baseline on a substantial number of settings, again with 
the advantage of parameter efficiency. 

Compared to the non-reshaped setting in Table~\ref{tab:SSM_looping}, the more structured recursion patterns \texttt{ABABAB} and \texttt{ABCABC} yield more consistent gains. This suggests that, once the input representation is improved through reshaping, introducing limited diversity across repeated blocks becomes more beneficial than full parameter sharing, allowing the model to better exploit the richer input structure. Similar trends are reported for the medium-sized input datasets in Table~\ref{tab:recursion_reshaping_medium} in the Appendix.

Taken together, these results suggest that reshaping and  depth-recursion are complementary: reshaping lifts the baseline by improving how information is presented to the 
model, while depth-recursion can extract additional value from the resulting representation with a reduced parameter count.

\begin{table}
    \centering
        \caption{Accuracy (\%) on the three time series classification benchmarks which required input reshaping, for four SSM architectures, combining input reshaping with depth-recursion. The baseline (\texttt{ABCDEF}) is the reshaped independent 6-layer model; depth-recursive configurations (\texttt{AAAAAA}, \texttt{ABABAB}, \texttt{ABCABC}) are applied 
        on top of the same reshaping. \underline{Underlined}: depth-recurrence model is comparable to the baseline, \textbf{Bold}: depth-recurrence model outperforms the reshaped baseline.}
    \label{tab:SSM_looped_reshaped}
    \resizebox{\textwidth}{!}{%
    \begin{tabular}{c|c|c|cccccc}
        \toprule
        % Dataset & Model & Reshaped Baseline & & & & & & \\
             &  &  Reshaped Baseline & \multicolumn{6}{c}{Depth-recursion}\\
            Dataset & Model & \texttt{ABCDEF} & \multicolumn{2}{c}{\texttt{AAAAAA}} & \multicolumn{2}{c}{\texttt{ABABAB}} & \multicolumn{2}{c}{\texttt{ABCABC}}\\
            && $\mathcal{L}^{\text{final}}$& $\mathcal{L}^{\text{final}}$ & $\mathcal{L}^{\text{block}}$ & $\mathcal{L}^{\text{final}}$ & $\mathcal{L}^{\text{block}}$ & $\mathcal{L}^{\text{final}}$ & $\mathcal{L}^{\text{block}}$ \\
        \midrule
        \multirow{4}{*}{Ethanol}
            & S5      & $24.56 \pm 4.64$ & $\mathbf{24.56 \pm 4.64}$ & $\mathbf{24.56 \pm 4.64}$ & $\mathbf{24.56 \pm 4.64}$ & $\mathbf{24.56 \pm 4.64}$ & $\mathbf{24.56 \pm 4.64}$ & $\mathbf{24.56 \pm 4.64}$\\
            & LRU     & $27.59 \pm 6.12$ & $\mathbf{27.59 \pm 6.12}$ & $\mathbf{27.59 \pm 6.12}$ & $\mathbf{27.59 \pm 6.12}$  & $25.82 \pm 6.33$ & $25.82 \pm 6.33$ & $\mathbf{27.59 \pm 6.12}$\\
            & LinOSS  & $24.56 \pm 4.64$ & $\mathbf{24.56 \pm 4.64}$ & $\mathbf{24.56 \pm 4.64}$ & $\mathbf{24.56 \pm 4.64}$ & $\mathbf{24.56 \pm 4.64}$  & $\mathbf{24.56 \pm 4.64}$ & $\mathbf{24.56 \pm 4.64}$\\
            & LrcSSM  & $\mathbf{39.21 \pm 3.00}$ & $37.11 \pm 6.06$ & $33.68 \pm 7.54$ & $37.63 \pm 2.39$ & $36.84 \pm 3.09$ & $37.63 \pm 3.56$ &  $36.84 \pm 2.63$\\
        \midrule
        \multirow{4}{*}{Motor}
            & S5      & $\mathbf{61.75 \pm 5.25}$ & $60.00 \pm 6.02$ & $58.60 \pm 4.24$ & $59.65 \pm 5.77$ & $60.00 \pm 5.37$ & $\underline{60.70 \pm 8.35}$ & $60.00 \pm 7.31$\\
            & LRU     & $\mathbf{58.60 \pm 6.04}$ & $53.33 \pm 3.06$ & $56.84 \pm 5.16$ & $56.14 \pm 7.28$ & $54.74 \pm 6.79$ & $57.54 \pm 6.79$ & $55.79 \pm 6.51$\\
            & LinOSS  & $52.28 \pm 6.96$ & $\mathbf{55.44 \pm 5.72}$ & $\mathbf{55.44 \pm 3.44}$ & $\mathbf{52.63 \pm 3.84}$ & $\mathbf{54.39 \pm 7.92}$  &  $\mathbf{55.09 \pm 3.61}$& $\mathbf{54.74 \pm 5.25}$\\
            & LrcSSM  & $57.14 \pm 8.66$ & $56.07 \pm 3.24$ & $55.71 \pm 5.11$  & $\mathbf{60.71 \pm 6.06}$ & $52.86 \pm 3.70$ & $\mathbf{60.27 \pm 6.08}$ & $55.71 \pm 8.22$\\
        \midrule
        \multirow{4}{*}{Heartbeat}
            & S5      & $74.19 \pm 3.38$ & $\mathbf{76.13 \pm 2.37}$ & $\mathbf{75.80 \pm 2.28}$ & $\mathbf{76.45 \pm 2.62}$  & $\mathbf{75.81 \pm 4.33}$ & $\mathbf{75.81 \pm 2.28}$ & $\mathbf{75.81 \pm 2.70}$\\
            & LRU     & $75.48 \pm 2.37$ & $72.90 \pm 4.61$ & $73.87 \pm 3.59$ & $\mathbf{75.81 \pm 2.89}$ & $\mathbf{76.13 \pm 3.44}$ & $\underline{75.16 \pm 3.62}$ & $\underline{75.16 \pm 4.16}$\\
            & LinOSS  & $70.65 \pm 6.40$ & $\mathbf{71.29 \pm 3.44}$ & $\mathbf{71.61 \pm 4.16}$ & $\mathbf{70.32 \pm 3.76}$ &  $\mathbf{70.97 \pm 4.08}$&  $\mathbf{72.26 \pm 4.49}$ & $\mathbf{73.23 \pm 3.76}$\\
            & LrcSSM  & $\mathbf{77.67 \pm 6.08}$ & $75.00 \pm 4.56$ & $76.00 \pm 6.08$ & $75.67 \pm 5.08$ & $75.33 \pm 5.45$ & $76.00 \pm 7.51$ & $75.33 \pm 4.77$\\
        \bottomrule
    \end{tabular}
    }
    \vspace*{-3ex}
\end{table}

\subsection{Ablation: Depth-recursion Under Parameter-matched Conditions}
The main experiments in Section~\ref{sec:looped_ssm} compare looped SSMs against an independent 6-layer baseline with strictly more parameters. To isolate the effect of depth-recursion from the effect of parameter count, we conduct a parameter-matched ablation: we compare each looped configuration against an independent model with the same number of unique blocks: \texttt{A} versus \texttt{AAAAAA}, \texttt{AB} versus \texttt{ABABAB}, and \texttt{ABC} versus \texttt{ABCABC}. We report results for LRU, as it had consistent accuracy gains across all six benchmarks in the main depth-recursion experiments. These are shown in Table~\ref{tab:LRU_simple_layers_vs_recursion}.

The most consistent gains appear in the \texttt{A} vs \texttt{AAAAAA} comparison, where a single block iterated six times outperforms the single-layer baseline on three of six benchmarks, sometimes by a substantial margin (Ethanol: $+3.8\%$, Motor: $+5.6\%$, Worms: $+4.4\%$), and matches it on the remaining three. This suggests that when the parameter budget is most constrained, iterating the same block repeatedly allows the model to build up richer representations across depth without increasing model size, showing the advantage of depth-recursion as an architectural choice.

The results are more mixed for \texttt{ABABAB} and \texttt{ABCABC}, where the corresponding independent baselines (\texttt{AB}, \texttt{ABC}) are already more expressive. Here, time-invariant tail recursion still produces gains on several benchmarks (e.g. \texttt{ABABAB} on SCP1, Motor and Worms, \texttt{ABCABC} on Heartbeat, SCP2, Ethanol and Worms), but also underperforms on others, suggesting a non-trivial tradeoff, as the number of unique blocks grows. We hypothesize that with two or three independent blocks, the baseline already captures sufficient representational diversity, such that the regularizing effect of parameter sharing becomes less decisive relative to the expressivity it sacrifices.

Comparing the two supervision strategies, $\mathcal{L}^{\text{final}}$
and $\mathcal{L}^{\text{block}}$, neither consistently dominates across datasets and configurations. Block-wise supervision occasionally yields additional gains (e.g.\ \texttt{ABABAB} on Motor, \texttt{ABCABC} on Worms and SCP2) but can also underperform final-layer supervision, indicating that the choice of supervision interacts with the specific dataset and sharing strategy in ways that require further investigation. Moreover, we fixed the total number of layers to $6$, and exploring deeper or adaptive depths may lead to different results. We leave this topic to future work.

Oveall, these results strengthen the interpretation that depth-recursion is not merely a consequence of parameter reduction: even at matched parameter counts, iterating a shared block across depth yields consistent gains when the base model is sufficiently constrained, pointing to the iterative refinement of representations in SSMs as an independently valuable computational mechanism.

% (here we check if just simply taking fewer no of layers would be better)
% Using LRU because it improved performance with depth-recursion across all datasets.

% & LRU & $71.29 \pm 1.21$ & $\mathbf{72.58 \pm 4.89}$ & $\mathbf{72.90 \pm 5.34}$ & $70.97 \pm 1.77$ & $70.32 \pm 4.16$ & $\mathbf{72.90 \pm 4.00}$ & $70.97 \pm 4.08$\\
% & LRU & $85.65 \pm 2.92$ & $\mathbf{85.65 \pm 2.28}$ & $84.94 \pm 2.72$  & $\mathbf{86.35 \pm 0.94}$ & $\mathbf{85.65 \pm 4.85}$ & $85.41 \pm 4.25$ & $84.94 \pm 3.28$\\
%& LRU & $55.09 \pm 5.04$ & $\mathbf{56.14 \pm 3.33}$ & $54.04 \pm 7.56$ & $53.68 \pm 5.04$ & $55.09 \pm 9.32$  & $54.39 \pm 8.95$ & $\mathbf{56.49 \pm 6.96}$\\
%& LRU  & $27.09 \pm 3.63$ & $\mathbf{27.34 \pm 3.81}$ & $\mathbf{27.09 \pm 3.26}$ & $\mathbf{27.34 \pm 3.81}$ & $26.84 \pm 3.53$ & $\mathbf{27.34 \pm 3.81}$ & $26.84 \pm 3.53$\\
%& LRU  & $56.14 \pm 6.37$ & $\mathbf{57.89 \pm 6.93}$ & $51.93 \pm 5.04$ & $53.68 \pm 5.72$ & $55.79 \pm 2.81$ & $53.33 \pm 5.61$ & $\mathbf{56.14 \pm 3.84}$\\
%        & LRU & $88.33 \pm 4.08$ & $\mathbf{90.56 \pm 5.15}$ & $\mathbf{90.00 \pm 1.36}$ & $87.78 \pm 4.84$  & $\mathbf{90.00 \pm 3.77}$ & $86.67 \pm 4.78$ & $\mathbf{89.44 \pm 4.78}$ \\
\begin{table}
    \centering
        \caption{Accuracy on six benchmarks for LRU under parameter-matched conditions. Each group compared an independent baseline with $m$ unique blocks (\texttt{A},\texttt{AB},\texttt{ABC}) against the corresponding depth-recursive configuration. Both $\mathcal{L}^{\text{final}}$ and $\mathcal{L}^{\text{block}}$ supervision are reported for the recursive models. All results are averaged over 5 seeds. \underline{Underlined}: recursive model is comparable to the baseline, \textbf{Bold}: recursive model outperforms the baseline.}
    \label{tab:LRU_simple_layers_vs_recursion}
    \resizebox{\textwidth}{!}{%
    \begin{tabular}{c|ccc|ccc|ccc}
    \toprule
    & Baseline & \multicolumn{2}{c|}{Depth-recursion}& Baseline & \multicolumn{2}{c|}{Depth-recursion}& Baseline & \multicolumn{2}{c}{Depth-recursion}\\
    & \texttt{A} & \multicolumn{2}{c|}{\texttt{AAAAAA}} & \texttt{AB} & \multicolumn{2}{c|}{\texttt{ABABAB}} & \texttt{ABC} & \multicolumn{2}{c}{\texttt{ABCABC}} \\
    & $\mathcal{L}^{\text{final}}$& $\mathcal{L}^{\text{final}}$ & $\mathcal{L}^{\text{block}}$ & $\mathcal{L}^{\text{final}}$& $\mathcal{L}^{\text{final}}$ & $\mathcal{L}^{\text{block}}$ & $\mathcal{L}^{\text{final}}$& $\mathcal{L}^{\text{final}}$ & $\mathcal{L}^{\text{block}}$ \\
         % & 1 layer & 1x6f & 1x6b & 2 layers  & 2x3f & 2x3b & 3 layers & 3x2f & 3x2b \\
         \midrule
        Heartbeat & $72.90 \pm 2.77$ & $\underline{72.58 \pm 4.89}$ & $\underline{72.90 \pm 5.34}$& $72.58 \pm 4.78$ & $70.97 \pm 1.77$ & $70.32 \pm 4.16$& $71.94 \pm 3.32$ & $\mathbf{72.90 \pm 4.00}$ & $70.97 \pm 4.08$ \\
        SCP1 & $85.88 \pm 3.24$ &  $\underline{85.65 \pm 2.28}$ & $84.94 \pm 2.72$ & $83.76 \pm 2.51$ & $\mathbf{86.35 \pm 0.94}$ & $\mathbf{85.65 \pm 4.85}$  & $86.12 \pm 2.40$ &  $85.41 \pm 4.25$ & $84.94 \pm 3.28$\\
        SCP2 & $56.49 \pm 5.59$ & $\underline{56.14 \pm 3.33}$ & $54.04 \pm 7.56$ & $55.44 \pm 5.39$  &  $53.68 \pm 5.04$ & $\underline{55.09 \pm 9.32}$  & $55.79 \pm 4.63$ & $54.39 \pm 8.95$ & $\mathbf{56.49 \pm 6.96}$ \\
        Ethanol & $23.54 \pm 8.30$ & $\mathbf{27.34 \pm 3.81}$ & $\mathbf{27.09 \pm 3.26}$  & $29.37 \pm 4.63$ & $27.34 \pm 3.81$ & $26.84 \pm 3.53$  & $22.28 \pm 8.03$ & $\mathbf{27.34 \pm 3.81}$ & $\mathbf{26.84 \pm 3.53}$ \\
        Motor & $52.28 \pm 7.31$ & $\mathbf{57.89 \pm 6.93}$ & $51.93 \pm 5.04$ & $54.74 \pm 8.83$ & $53.68 \pm 5.72$ & $\mathbf{55.79 \pm 2.81}$  & $58.25 \pm 4.35$ & $53.33 \pm 5.61$ & $56.14 \pm 3.84$ \\
        Worms & $86.11 \pm 3.04$ & $\mathbf{90.56 \pm 5.15}$ & $\mathbf{90.00 \pm 1.36}$  &  $89.44 \pm 3.69$& $87.78 \pm 4.84$  & $\mathbf{90.00 \pm 3.77}$  & $86.67 \pm 7.54$ & $\underline{86.67 \pm 4.78}$ & $\mathbf{89.44 \pm 4.78}$ \\
        \bottomrule
    \end{tabular}
    }
    \vspace*{-3ex}
\end{table}

\section{Discussion and Future Work}

\vspace*{-1ex}\textbf{On the mismatch between model input and data structure.}
A recurring observation motivating this work is that standard SSM pipelines 
feed raw time series directly into the model without adapting the input 
representation to the intrinsic information density of the data. For 
low-dimensional inputs (small $d$), each time step carries very little 
information, yet the model is asked to process one input data per step, regardless, which creates a mismatch between sequence length and the amount of signal 
available at each position. For high-dimensional inputs (large $D$), the 
opposite problem arises: each time step is information-dense, but the model 
must learn to disentangle a large feature vector at every step. Input 
reshaping addresses both regimes with a single parameter $c$, and its 
consistent empirical benefit suggests that this mismatch is a genuine 
bottleneck that has been largely overlooked in prior work, which has focused almost exclusively on architectural design.

\textbf{On the benefits of depth-recursion.}
In the main experiments, looped SSMs outperform independent baselines 
with strictly more parameters, a result that is all the more surprising, 
given that the looped model operates within a smaller hypothesis space, 
as established in Appendix~\ref{app:theoretical}. The parameter-matched ablation further shows that these gains are not merely a consequence of the reduced parameter count: even when compared against an independent model with the same number of parameters, depth-recursion continues to match or improve performance in most settings.

\textbf{Architecture-specific behaviour.}
The gains from depth-recursion are not uniform across architectures. LRU benefits most consistently, while LinOSS and LrcSSM show a more mixed picture, 
including notable drops on some benchmarks. We hypothesize that this 
reflects an interaction between the specific recurrence structure of each 
architecture and the inductive bias introduced by parameter sharing, but a 
principled explanation is lacking. Understanding when and why this interaction is 
favorable remains an open problem, which we formalize in 
Remark~\ref{app:depth_iteration_theory}.

\textbf{Limitations.}
This work focuses on time series classification as a controlled setting 
for evaluating depth-sharing and input reshaping. Extending these 
findings to forecasting and other sequence modeling tasks is a natural 
next step. We fix $L = 6$ throughout the experiments, which reflects the standard 
configuration in the SSM literature we build upon, and treat $c$ as a 
tuned scalar hyperparameter. Both are deliberate choices that simplify 
the comparison, and relaxing them is left for future work. 

\textbf{Future work.}
Several natural extensions follow from this work. On the theoretical side, a formal characterization of when depth iteration is beneficial and what determines the optimal concentration factor $c$ remains open. On the empirical side, it would be interesting to explore learned or adaptive reshaping strategies, to evaluate depth-recursion across a broader range of sequence modeling tasks, and to investigate varying $L$ beyond the fixed 6-layer setup used here. A related direction is early exit inference, where the model terminates 
after fewer than $L$ repetitions at test time based on a confidence 
criterion, reducing computation without retraining. Notably, 
$\mathcal{L}^{\text{block}}$ already trains intermediate representations to be discriminative, which is a necessary condition for reliable early exit, so the training infrastructure for this is partially in place. 
Investigating the resulting accuracy-compute tradeoff is a natural 
next step. Finally, reweighting the 
intermediate loss terms in $\mathcal{L}^{\text{block}}$ rather than 
averaging them uniformly could provide a curriculum-like training signal also worth exploring.

\vspace{-2ex}
\section{Conclusion}
\vspace{-2ex}
The standard recipe for applying SSMs to time series is as follows: Stack $L$ independent blocks and feed raw sequences directly. This leaves two design axes almost entirely unexplored: How parameters are shared across depth, and How the input is structured before the model sees it. This paper shows that both axes matter. 
A single block iterated repeatedly can match or outperform a larger model with independent parameters at every layer, not because it is more expressive, but because the constraint imposed by sharing turns out to be a feature rather than a limitation. Similarly, reshaping the input to better match the information density of the data yields consistent gains without touching the model at all. Together, these findings suggest that the 
performance of an SSM on time series is shaped as much by how it is configured and how its input is presented to the SSM itself. Both contributions improve performance without increasing model size: input reshaping requires no change to the model, while depth-recursion reduces the parameter count to as little as $1/L$ of the original model.

\begin{ack}
Research was sponsored by the Department of the Air Force Artificial Intelligence Accelerator and was accomplished under Cooperative Agreement Number FA8750-19-2-1000. The views and conclusions contained in this document are those of the authors and should not be interpreted as representing the official policies, either expressed or implied, of the Department of the Air Force or the U.S. Government. The U.S. Government is authorized to reproduce and distribute reprints for Government purposes notwithstanding any copyright notation herein.
% Use unnumbered first level headings for the acknowledgments. All acknowledgments
% go at the end of the paper before the list of references. Moreover, you are required to declare
% funding (financial activities supporting the submitted work) and competing interests (related financial activities outside the submitted work).
% More information about this disclosure can be found at: \url{https://neurips.cc/Conferences/2026/PaperInformation/FundingDisclosure}.

% Do {\bf not} include this section in the anonymized submission, only in the final paper. You can use the \texttt{ack} environment provided in the style file to automatically hide this section in the anonymized submission.
\end{ack}

% \section*{References}
\bibliographystyle{plainnat}
\bibliography{references}

% References follow the acknowledgments in the camera-ready paper. Use unnumbered first-level heading for
% the references. Any choice of citation style is acceptable as long as you are
% consistent. It is permissible to reduce the font size to \verb+small+ (9 point)
% when listing the references.
% Note that the Reference section does not count towards the page limit.
% \medskip

% {
% \small

% [1] Alexander, J.A.\ \& Mozer, M.C.\ (1995) Template-based algorithms for
% connectionist rule extraction. In G.\ Tesauro, D.S.\ Touretzky and T.K.\ Leen
% (eds.), {\it Advances in Neural Information Processing Systems 7},
% pp.\ 609--616. Cambridge, MA: MIT Press.

% [2] Bower, J.M.\ \& Beeman, D.\ (1995) {\it The Book of GENESIS: Exploring
%   Realistic Neural Models with the GEneral NEural SImulation System.}  New York:
% TELOS/Springer--Verlag.

% [3] Hasselmo, M.E., Schnell, E.\ \& Barkai, E.\ (1995) Dynamics of learning and
% recall at excitatory recurrent synapses and cholinergic modulation in rat
% hippocampal region CA3. {\it Journal of Neuroscience} {\bf 15}(7):5249-5262.
% }

%%%%%%%%%%%%%%%%%%%%%%%%%%%%%%%%%%%%%%%%%%%%%%%%%%%%%%%%%%%%
\newpage
\appendix

\section{Technical Appendices and Supplementary Material}
% Technical appendices with additional results, figures, graphs, and proofs may be submitted with the paper submission before the full submission deadline (see above). You can upload a ZIP file for videos or code, but do not upload a separate PDF file for the appendix. There is no page limit for the technical appendices. 

% Note: Think of the appendix as ``optional reading'' for reviewers. The paper must be able to stand alone without the appendix; for example, adding critical experiments that support the main claims to an appendix is inappropriate. 
\subsection{Theoretical Analysis}\label{app:theoretical}

We fix $L$ layers and a per-block parameter budget $k$ throughout. We write 
$\mathcal{F}^{\text{loop}}$, $\mathcal{F}^{\text{ind}}$, and 
$\mathcal{F}^{\text{partial}}_m$ for the hypothesis classes of the looped, 
independent, and partially shared models respectively, suppressing the dependence 
on $k$ and $L$ for readability.

\begin{proposition}[Containment]
\label{prop:containment}
$\mathcal{F}^{\text{loop}} \subseteq \mathcal{F}^{\text{ind}}$.
\end{proposition}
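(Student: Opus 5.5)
The proof is a diagonal embedding of parameters. First I make the two hypothesis classes explicit as images of parameter maps:
\[
\mathcal{F}^{\text{loop}} = \{F^{\text{loop}}_\theta : \theta \in \Theta_k\}, \qquad \mathcal{F}^{\text{ind}} = \{F^{\text{ind}}_{(\theta_1,\dots,\theta_L)} : (\theta_1,\dots,\theta_L) \in \Theta_k^L\}.
\]
Both are sets of functions $\mathbb{R}^{T\times d}\to\mathbb{R}^{T\times d}$. The key point is that both are built from the same block family $\{f_\theta\}_{\theta\in\Theta_k}$ with the same parameter space $\Theta_k$. If a classification head is included, it is the same $g_\phi$ in both models, and composing with it preserves inclusion.

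Next, fix an arbitrary $F\in\mathcal{F}^{\text{loop}}$ and write $F = F^{\text{loop}}_\theta$ for some $\theta\in\Theta_k$. Define the diagonal map
\[
\Delta:\Theta_k\to\Theta_k^L, \qquad \Delta(\theta)=(\theta,\dots,\theta).
\]
This tuple is admissible for the independent model, because the definition of $F^{\text{ind}}$ only requires each $\theta_i\in\Theta_k$, with no constraint coupling them. "Independently" means the coordinates are free, not that they must be distinct. Then for every input $x\in\mathcal{X}$,
\[
F^{\text{ind}}_{\Delta(\theta)}(x) = f_\theta\circ\cdots\circ f_\theta(x) = F^{\text{loop}}_\theta(x).
\]
This can be checked formally by induction on the number of composed blocks. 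So the two maps agree pointwise, hence $F\in\mathcal{F}^{\text{ind}}$, and the inclusion follows.

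There is essentially no analytic obstacle. The only step that needs care is confirming that the block map $f_\theta$ depends on depth only through $\theta$. That is, no layer-specific quantities (positional or depth embeddings, layer-indexed normalization statistics, or architecture-specific discretization step sizes) enter $f$ outside of $\theta$. I would state this as part of the setup, and note that it holds for LRU, S5, LinOSS and LrcSSM as used here, since all such quantities are part of each block's parameters.

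The same argument extends directly to the chain stated in Section~\ref{sec:theoretical}:
\begin{itemize}
\item Taking $(\theta,\dots,\theta)\in\Theta_k^m$ gives $\mathcal{F}^{\text{loop}}\subseteq\mathcal{F}^{\text{partial}}_m$.
\item Taking the periodic tuple $(\theta_1,\dots,\theta_m,\theta_1,\dots,\theta_m,\dots)\in\Theta_k^L$ gives $\mathcal{F}^{\text{partial}}_m\subseteq\mathcal{F}^{\text{ind}}$.
\end{itemize}
The strictness claimed informally in the text is not part of this proposition and would need a separate, architecture-specific separating example.
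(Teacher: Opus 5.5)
Your proposal is correct and takes essentially the same approach as the paper, which likewise recovers $F^{\text{loop}}_\theta$ from the independent model by setting $\theta_i = \theta$ for all $i$ (your diagonal map $\Delta$). Your explicit assumption that $f_\theta$ depends on depth only through $\theta$ is a hypothesis the paper leaves implicit, and your periodic-tuple extension is the same device the paper uses for Proposition~\ref{prop:monotone}.
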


\begin{proof}
Any looped model $F^{\text{loop}}_\theta = f_\theta \circ \cdots \circ 
f_\theta$ is recovered by the independent model 
$F^{\text{ind}}_{(\theta_1,\dots,\theta_L)}$ by setting $\theta_i = \theta$ 
for all $i$.
\end{proof}

\begin{remark}
The containment is expected to be strict, as the independent model can compose 
blocks with distinct parameters, realising functions unavailable to the looped 
model, but establishing this formally requires architecture-specific analysis. For 
Corollary~\ref{cor:inductive_bias}, containment alone suffices.
\end{remark}

\begin{proposition}[Monotone containment under divisibility]
\label{prop:monotone}
Let $m$ and $m'$ be divisors of $L$ with $m \mid m'$. Then
\begin{equation}
    \mathcal{F}^{\text{partial}}_m \subseteq \mathcal{F}^{\text{partial}}_{m'}.
\end{equation}
In particular, for any divisor $m$ of $L$,
\begin{equation}
    \mathcal{F}^{\text{loop}} = \mathcal{F}^{\text{partial}}_1 
    \subseteq \mathcal{F}^{\text{partial}}_m 
    \subseteq \mathcal{F}^{\text{partial}}_L = \mathcal{F}^{\text{ind}}.
\end{equation}
\end{proposition}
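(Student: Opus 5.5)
The plan is to prove the containment by an explicit parameter-tying map, in the same spirit as the proof of Proposition~\ref{prop:containment}. Write $s = m'/m$, which is an integer by the hypothesis $m \mid m'$. Take any $F^{\text{partial}}_{(\theta_1,\dots,\theta_m)} \in \mathcal{F}^{\text{partial}}_m$. It unrolls into a depth-$L$ composition in which layer $\ell$ (for $\ell = 1,\dots,L$) uses the parameter $\theta_{((\ell-1) \bmod m)+1}$. We must find parameters $(\eta_1,\dots,\eta_{m'})$ for the partially shared model with $m'$ unique blocks whose unrolled composition uses the same parameter at every layer. That model uses $\eta_{((\ell-1) \bmod m')+1}$ at layer $\ell$.

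The natural choice is $\eta_j = \theta_{((j-1) \bmod m)+1}$ for $j = 1,\dots,m'$. In words, the $m'$-block is the $m$-block pattern repeated $s$ times. The key step is the index identity
\begin{equation}
((\ell-1) \bmod m') \bmod m = (\ell-1) \bmod m,
\end{equation}
which holds because $m \mid m'$. Applying it to the layer indices shows that both models assign the same parameter to layer $\ell$. Since the two models are compositions of the same maps $f_{\theta}$ in the same order, they coincide as functions on $\mathbb{R}^{T\times d}$.

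Written at the block level, this says that the $m'$-block $f_{\eta_{m'}}\circ\cdots\circ f_{\eta_1}$ equals $(f_{\theta_m}\circ\cdots\circ f_{\theta_1})^{s}$. Taking $r' = L/m'$ repetitions of it gives $(f_{\theta_m}\circ\cdots\circ f_{\theta_1})^{s r'}$, and $s r' = L/m = r$. This recovers $F^{\text{partial}}_{(\theta_1,\dots,\theta_m)}$, using only associativity of composition.

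For the chain, I will first check the endpoint identifications directly from the definitions. With $m=1$ we have $r=L$ and a single block repeated $L$ times, which is $\mathcal{F}^{\text{loop}}$. With $m=L$ we have $r=1$ and $L$ distinct blocks, which is $\mathcal{F}^{\text{ind}}$. Then $1 \mid m$ and $m \mid L$ hold for every divisor $m$ of $L$, so the general containment applies twice and yields the chain.

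There is no real obstacle here. The only care needed is the modular index bookkeeping and the assumption that every layer in every model uses the same block map $f$ and the same parameter space $\Theta_k$; that assumption is what makes parameter tying legitimate.
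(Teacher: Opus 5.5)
Your proposal is correct and uses the same periodic parameter-tying map as the paper, $\theta'_j = \theta_{((j-1)\bmod m)+1}$. Your explicit check of the index identity $((\ell-1)\bmod m')\bmod m = (\ell-1)\bmod m$, together with the endpoint identifications at $m=1$ and $m=L$, fills in bookkeeping that the paper's proof leaves implicit.
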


\begin{proof}
Since $m \mid m'$, write $m' = s \cdot m$ for some integer $s \geq 1$. Any 
model in $\mathcal{F}^{\text{partial}}_m$ has $m$ unique blocks 
$(\theta_1, \dots, \theta_m)$ repeated $r = L/m$ times. It is recovered by 
a model in $\mathcal{F}^{\text{partial}}_{m'}$ by setting its $m'$ parameters 
periodically: $\theta'_j = \theta_{((j-1) \bmod m)\,+\,1}$ for $j = 1, 
\dots, m'$. This replicates the period-$m$ pattern within each period-$m'$ 
block, yielding the same composed function.
\end{proof}

\begin{remark}
The divisibility condition $m \mid m'$ is necessary. For $L = 6$, the values 
$m = 2$ and $m' = 3$ are both divisors of $L$ but satisfy neither $2 \mid 3$ 
nor $3 \mid 2$, so $\mathcal{F}^{\text{partial}}_2$ and 
$\mathcal{F}^{\text{partial}}_3$ are incomparable in general. In our 
experiments with $m \in \{1, 2, 3\}$, the containment relations that hold are
\begin{equation}
    \mathcal{F}^{\text{loop}} \subseteq \mathcal{F}^{\text{partial}}_2 
    \subseteq \mathcal{F}^{\text{ind}}
    \qquad \text{and} \qquad
    \mathcal{F}^{\text{loop}} \subseteq \mathcal{F}^{\text{partial}}_3 
    \subseteq \mathcal{F}^{\text{ind}},
\end{equation}
while $\mathcal{F}^{\text{partial}}_2$ and $\mathcal{F}^{\text{partial}}_3$ 
are not ordered.
\end{remark}

\begin{corollary}[Depth-recursion as beneficial inductive bias]
\label{cor:inductive_bias}
Since $\mathcal{F}^{\text{loop}} \subseteq \mathcal{F}^{\text{ind}}$, any 
performance advantage of a looped SSM over the independent baseline cannot 
be attributed to greater expressivity. When such an advantage is observed 
empirically, it must instead arise from properties of the optimisation induced 
by parameter sharing, which acts as an implicit regulariser and reduces the effective degrees of freedom, rather than from the ability to represent a 
larger class of functions.
\end{corollary}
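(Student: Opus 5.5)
The corollary is essentially a logical consequence of Proposition~\ref{prop:containment}, so the plan is to make the implicit argument explicit. First, I fix a dataset distribution and a loss $\mathcal{L}$, and define the population risk $R(F) = \mathbb{E}\,\mathcal{L}(g_\phi(F(x)),y)$, optimized jointly over the head $\phi$. Because $\mathcal{F}^{\text{loop}} \subseteq \mathcal{F}^{\text{ind}}$ and the head class is the same for both models, taking infima over nested sets gives
\begin{equation}
\inf_{F \in \mathcal{F}^{\text{ind}},\,\phi} R(F) \le \inf_{F \in \mathcal{F}^{\text{loop}},\,\phi} R(F).
\end{equation}
The same inequality holds with the empirical risk in place of $R$, since the containment is pointwise in the function class and does not depend on the data. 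Consequently the best achievable training or test risk of the independent model is never worse than that of the looped model. I will also note that Proposition~\ref{prop:monotone} gives the analogous statement for every partially shared class $\mathcal{F}^{\text{partial}}_m$.

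Second, I argue by contraposition. Suppose a trained looped model $\hat F^{\text{loop}}$ attains strictly lower risk than a trained independent model $\hat F^{\text{ind}}$. The function $\hat F^{\text{loop}}$ is itself realised in $\mathcal{F}^{\text{ind}}$ by tying $\theta_i = \theta$, exactly as in the proof of Proposition~\ref{prop:containment}. So there is a point in the independent parameter space whose risk is lower than the one the independent model actually reached. The gap therefore lies in which point of $\mathcal{F}^{\text{ind}}$ the training procedure selects: the initialization, the optimizer dynamics on the larger parameter space $\Theta_k^L$, or generalization from finite samples. It cannot lie in which functions are representable. This is the formal content of ``cannot be attributed to greater expressivity.''

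The main obstacle is the second half of the corollary, the positive claim that the advantage ``must arise'' from optimisation effects of parameter sharing, which act as an implicit regulariser. This is interpretive rather than provable as stated. Containment rules out expressivity, but it does not by itself single out regularisation over other explanations such as different effective learning rates, the distinct gradient accumulation $\sum_i \nabla_{\theta_i}$ on tied weights, or seed variance. I would therefore phrase the proof so that it rigorously establishes only the exclusion of expressivity. For the regularisation reading, I would support it with two facts: the looped model has $k$ rather than $kL$ free parameters, which gives a smaller capacity term in standard parameter-counting generalization bounds; and the tied parameterization is a reparameterization that restricts the optimizer to the diagonal subspace $\{\theta_1=\cdots=\theta_L\}$. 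I would present both as motivation rather than as a derivation.
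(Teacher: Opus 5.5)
Your proposal is correct and takes the same route as the paper, which justifies the corollary only by invoking Proposition~\ref{prop:containment}: any looped function is realised in $\mathcal{F}^{\text{ind}}$ by tying $\theta_i=\theta$, so an observed advantage reflects which point of $\mathcal{F}^{\text{ind}}$ training selects rather than what is representable. The paper gives no further derivation of the ``implicit regulariser'' clause, so proving only the exclusion of expressivity and presenting the regularisation reading as interpretation (alongside alternatives such as tied-gradient dynamics or seed variance) is, if anything, more careful than the original.
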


\begin{remark}[Gradient aggregation under parameter sharing]
In the looped model, the gradient with respect to the shared parameter 
$\theta$ accumulates contributions from all $r$ repetitions. Under 
block-wise supervision $\mathcal{L}^{\text{block}}$, this takes the form:
\begin{equation}
    \frac{\partial \mathcal{L}^{\text{block}}}{\partial \theta} = 
    \frac{1}{r}\sum_{j=1}^{r} \frac{\partial \mathcal{L}(g_\phi(h^{(j)}), y)}
    {\partial \theta},
\end{equation}
where $h^{(j)}$ is the representation after the $j$-th repetition of the 
$m$-layer block. Under final supervision $\mathcal{L}^{\text{final}}$, only 
term $j\,{=}\,r$ contributes directly, though gradients still flow back 
through all $r$ repetitions via backpropagation through the shared 
parameters. In both cases, parameter sharing causes gradients from multiple 
depth positions to accumulate into a single update, which may reduce gradient 
variance relative to the independent model where each block receives gradients 
from a single position only.
\end{remark}

\begin{remark}[Depth iteration and nonlinearity]~\label{app:depth_iteration_theory}
Although the state recurrence of LRU, S5 and LinOSS is linear, the full SSM block 
$f_\theta$ is nonlinear due to the projection applied after the recurrence. 
Depth iteration therefore composes nonlinear maps, and the dynamics of 
iterating $f_\theta$ cannot in general be characterised by spectral 
properties of the recurrence matrix alone. Understanding theoretically when and why iterating a shared block is beneficial remains an open question.
\end{remark}

\subsection{Parameter and Runtime Analysis}
\begin{table}[h]
    \centering
    \caption{Parameter counts (in thousands) and training runtime (seconds per 
    epoch) for each depth-sharing configuration. Runtime is measured using LRU with
    $\mathcal{L}^{\text{final}}$ supervision on an A40 GPU. All looped configurations apply 
    the same number of forward operations as the independent \texttt{ABCDEF} baseline, so 
    runtime differences are negligible.}
    \label{tab:efficiency}
    \resizebox{\textwidth}{!}{%
    \begin{tabular}{l|cc|cc|cc|cc}
        \toprule
        & \multicolumn{2}{c|}{\texttt{ABCDEF}} 
        & \multicolumn{2}{c|}{\texttt{AAAAAA}} 
        & \multicolumn{2}{c|}{\texttt{ABABAB}} 
        & \multicolumn{2}{c}{\texttt{ABCABC}} \\
        Dataset & Params (K) & Time (s) & Params (K) & Time (s) & Params (K) & Time (s) & Params (K) & Time (s) \\
        \midrule
        Heartbeat  & 154.6 & 8.12  & 29.2 & 5.81  & 54.3 & 6.17  & 79.4 & 7.69  \\
        Worms      & 151.3 & 211.82 & 25.9 & 209.32 & 51.0 & 210.02 & 76.0 & 210.54 \\
        Ethanol    & 151.1 & 22.95 & 25.6 & 22.28 & 50.7 & 22.50 & 75.8 & 22.51 \\
        Motor      & 154.8 & 36.63 & 29.4 & 35.60 & 54.5 & 35.64 & 79.6 & 35.71 \\
        SCP1       & 151.1 & 12.04 & 25.7 & 11.18 & 50.8 & 11.41 & 75.8 & 11.48 \\
        SCP2       & 151.2 & 15.97 & 25.7 & 14.48 & 50.8 & 14.78 & 75.9 & 14.81 \\
        \midrule
        Reduction  & $1\times$ & --- & ${\sim}6\times$ & ${\sim}1\times$ & ${\sim}3\times$ & ${\sim}1\times$ & ${\sim}2\times$ & ${\sim}1\times$ \\
        \bottomrule
    \end{tabular}
    }
\end{table}

Table~\ref{tab:efficiency} reports parameter counts and training runtime for 
each depth-sharing configuration. The looped models reduce the parameter count 
by up to $6\times$ relative to the independent baseline. In reality, it is slightly less than 
the theoretical $6\times$ factor because the input and classification projection layers 
are not shared, only the recurrent SSM core. Despite this reduction, the forward 
pass cost is identical across all configurations, as all models apply exactly 
six blocks regardless of sharing strategy. The small runtime reductions observed 
for looped models relative to the baseline are coming from to the optimizer 
step: with fewer unique parameters, the optimizer maintains fewer moment 
estimates and performs fewer independent gradient updates per step. Taken 
together, these results confirm that depth-sharing is not only parameter-efficient 
but also computationally efficient: the gains reported in 
Section~\ref{sec:results} come at a slight decrease in runtime and at a substantial reduction in model size.

\subsection{Dataset Descriptions}
The UEA classification benchmark includes 6 datasets, referred to in their 
abbreviated form throughout the paper: \textit{EthanolConcentration} 
(Ethanol)~\citep{large2018detecting}, \textit{EigenWorms} 
(Worms)~\citep{yemini2013database}, \textit{SelfRegulationSCP1} (SCP1) and 
\textit{SelfRegulationSCP2} (SCP2)~\citep{birbaumer1999spelling}, 
\textit{MotorImagery} (Motor)~\citep{lal2004methods}, and 
\textit{Heartbeat}~\citep{goldberger2000physiobank}.
% \begin{table}
%     \centering
%         \caption{Caption}
%     \label{tab:dataset_details}
%     \begin{tabular}{ccccccc}
%     \toprule
%          & Heartbeat & SCP1 & SCP2 & Ethanol & Motor & Worms\\
%      \midrule
%        Sequence length  & 405 & 896 & 1,152 & 1,751 & 3,000 & 17,984 \\
%        Input size  & 61 & 6 & 7 & 2 & 63 & 6\\
%      \bottomrule
%     \end{tabular}
% \end{table}
\begin{table}[h]
    \centering
    \caption{Overview of the multivariate time series classification datasets 
from the UEA archive \cite{bagnall2018uea} used in our experiments, 
ordered by input dimensionality.}
    \label{tab:dataset_details}
    \begin{tabular}{lcccccc}
    \toprule
         & Ethanol & Worms & SCP1 & SCP2 & Heartbeat & Motor \\
    \midrule
      Sequence length (notation: $T$)  & 1,751 & 17,984 & 896 & 1,152 & 405 & 3,000 \\
      Input size (notation: $d$, $D$)      & 2     & 6      & 6   & 7     & 61  & 63    \\
    \bottomrule
    \end{tabular}
\end{table}

The datasets can be accessed through the UEA archive at the following URLs:
\begin{itemize}
    \setlength{\leftmargin}{0pt}
    \setlength\itemindent{0pt}
    \item  \url{https://timeseriesclassification.com/description.php?Dataset=EthanolConcentration} 
    \item \url{https://www.timeseriesclassification.com/description.php?Dataset=EigenWorms} 
    \item \url{https://www.timeseriesclassification.com/description.php?Dataset=SelfRegulationSCP1} 
    \item \url{https://www.timeseriesclassification.com/description.php?Dataset=SelfRegulationSCP2} 
    \item \url{https://www.timeseriesclassification.com/description.php?Dataset=MotorImagery} 
    \item \url{https://www.timeseriesclassification.com/description.php?Dataset=Heartbeat} 
\end{itemize}

Throughout the paper, we make the following categorization based on the input size shown in Table~\ref{tab:dataset_details}:
\begin{itemize}
    \item \textbf{Low-dimensional Input:} Ethanol
    \item \textbf{Medium-dimensional Input:} Worms, SCP1, SCP2
    \item \textbf{High-dimensional Input:} Heartbeat, Motor
\end{itemize}

We denote the input size of low- and medium-dimensional input datasets with $d$, and the high-dimensional input dataset with $D$.
Note that we make this categorization also relative to the architecture (64-dimensional state size and hidden state).

\subsection{Additional Remarks on the Datasets and Models}
As described in the main text in Section~\ref{sec:expsetup}, previous work evaluates SSMs via extensive hyperparameter searches across layers (${2,4,6}$), state sizes (${16,64,256}$), and hidden sizes (${16,64,128}$). While this yields strong results, it offers limited intuition on dataset sensitivity, and the best configurations vary across SSM types. See the appendices of~\citep{walkerlog, rusch2024oscillatory, farsang2025parallelizationnonlinearstatespacemodels}. In this work, we fix the hidden and state sizes to intermediate values (64) to enable controlled comparisons across architectures. While this may not yield optimal performance for each model individually, it allows us to isolate architectural effects and derive more principled insights. This is particularly important for depth-recursion, where the goal is to study the effect of repeating blocks rather than stacking independent ones. Varying model sizes would otherwise confound this analysis, making it unclear whether observed gains stem from recursion itself or from differences in model capacity. Under this fixed architecture, input reshaping (e.g. stacking multiple timesteps for low-dimensional input sizes or rechunking sequences for high-dimensional ones) becomes a natural and worthwhile preprocessing step to explore, as it allows us to adapt the input information-density without altering model capacity.

\subsection{Training Setup}
Training was conducted on A40 and A100 GPUs (40 GB and 80 GB memory). Each data split trained in under 20 minutes, varying by dataset and model, with early stopping applied.

Our code builds on the codebases of LinOSS~\citep{rusch2024oscillatory} (containing S5, LRU and LinOSS implementations) and LrcSSM~\citep{farsang2025parallelizationnonlinearstatespacemodels}, expanding the model architectures with recursion and input reshaping.

\subsection{Effect of Input Reshaping of Medium-sized Inputs}
We report the results of the medium-dimensional inputs (relative to the architecture). These datasets include the Worms, SCP1 and SCP2.

\begin{table}[h]
    \centering
    \caption{Effect of input reshaping on classification accuracy (\%) across models and datasets. 
    For each model, \textit{Baseline} denotes the fixed architecture with learning rate tuning, 
    while \textit{Reshaped} applies concatenation of two timesteps in all three medium-sized input datasets (Worms: $d=6 \rightarrow 12$, SCP1: $d=6 \rightarrow 12$, SCP2: $d=7 \rightarrow 14$) to increase information density before feeding into the same architecture setup. 
    Results are averaged over 5 seeds, with standard deviations reported. 
    Bold values indicate improvement over the corresponding baseline. }
    \label{tab:input_reshape_medium}
    \resizebox{\textwidth}{!}{%
    \begin{tabular}{l|cc|cc|cc|cc}
    \toprule
        & \multicolumn{2}{c|}{LRU}& \multicolumn{2}{c|}{S5} & \multicolumn{2}{c|}{LinOSS} & \multicolumn{2}{c}{LrcSSM} \\
         & Baseline & Reshaped & Baseline & Reshaped & Baseline & Reshaped & Baseline & Reshaped\\
         \midrule
         Worms  & $88.33 \pm 4.08$ & $\mathbf{88.89 \pm 6.33}$ & $86.67 \pm 5.10$ & $83.89 \pm 4.44$ & $95.83 \pm 3.11$ & $70.00 \pm 31.93$ & $83.33 \pm 5.56$ & $80.00 \pm 7.19$ \\
        SCP1  & $85.65 \pm 2.92$ & $\mathbf{86.59 \pm 4.74}$ & $90.12 \pm 2.05$ & $88.94 \pm 1.19$ & $85.88 \pm 4.27$ & $85.18 \pm 4.12$  &$84.52 \pm 4.29$ & $\mathbf{84.76 \pm 1.00}$\\
        SCP2  & $55.09 \pm 5.04$ & $\mathbf{55.09 \pm 3.61}$ & $54.74 \pm 5.25$ & $\mathbf{55.09 \pm 3.94}$ & $54.39 \pm 6.37$ & $\mathbf{56.49 \pm 5.59}$ & $57.14 \pm 5.79$ & $51.07 \pm 1.60$ \\
        \bottomrule
    \end{tabular}
    }
\end{table}

%TODO: decide to show it here or move to the appendix this part of the table
For medium-sized input dimensions ($d=6$ and $d=7$), doubling the input vector can lead to small increments, but not consistent across models. Their original input size seems to be sufficient already.

\subsection{Depth-recursion with Input Reshaping of Medium-sized Inputs}
For completeness, we similarly report the depth-recursion results with the reshaped input for medium-sized datasets in Table~\ref{tab:recursion_reshaping_medium}. Overall, the results show that depth-recursion remains competitive when combined with input reshaping, and in several cases leads to improvements over the reshaped baseline. Importantly, the same trend observed in the main text persists here: the recursive patterns \texttt{ABABAB} and \texttt{ABCABC} consistently match or outperform both the baseline and the simpler \texttt{AAAAAA} recursion.

Across all three datasets, \texttt{ABABAB} and \texttt{ABCABC} yield the most reliable gains. On Worms, these two configurations achieve the strongest performance for LRU, S5 and LrcSSM, clearly surpassing the reshaped baseline and generally improving over \texttt{AAAAAA}. A similar pattern appears on SCP1, where \texttt{ABABAB} and \texttt{ABCABC} again provide the best or near-best results, particularly for LRU. On SCP2, although the task is more challenging and variance is higher, the best-performing configurations are again \texttt{ABABAB} and \texttt{ABCABC}, which consistently achieve the top accuracies across models.

In contrast, the fully shared recursion \texttt{AAAAAA} shows less consistent behavior: while it occasionally improves accuracy (e.g. for Worms), its final performance is more variable and often does not match the gains achieved by the more structured patterns. This suggests that introducing limited diversity in the recursive blocks (as in \texttt{ABABAB} and \texttt{ABCABC}) is crucial, even when input reshaping is already providing a form of structural bias.

\begin{table}
    \centering
        \caption{Accuracy (\%) on the three time series classification benchmarks which do \emph{not} necessarily require input reshaping, for four SSM architectures, combining input reshaping with depth-recursion. The baseline (\texttt{ABCDEF}) is the reshaped independent 6-layer model; depth-recursive configurations (\texttt{AAAAAA}, \texttt{ABABAB}, \texttt{ABCABC}) are applied 
        on top of the same reshaping. \underline{Underlined}: depth-recurrence model is comparable to the baseline, \textbf{Bold}: depth-recurrence model outperforms the reshaped baseline.}
    \label{tab:recursion_reshaping_medium}
    \resizebox{\textwidth}{!}{%
    \begin{tabular}{c|c|c|ccccccc}
        \toprule
            &  &  Reshaped Baseline & \multicolumn{6}{c}{Depth-recursion}\\
            Dataset & Model & \texttt{ABCDEF} & \multicolumn{2}{c}{\texttt{AAAAAA}} & \multicolumn{2}{c}{\texttt{ABABAB}} & \multicolumn{2}{c}{\texttt{ABCABC}}\\
            && $\mathcal{L}^{\text{final}}$& $\mathcal{L}^{\text{final}}$ & $\mathcal{L}^{\text{block}}$ & $\mathcal{L}^{\text{final}}$ & $\mathcal{L}^{\text{block}}$ & $\mathcal{L}^{\text{final}}$ & $\mathcal{L}^{\text{block}}$ \\
        \midrule
        \multirow{4}{*}{Worms}
            & LRU     & $88.89 \pm 6.33$ & $87.78 \pm 6.48$ & $\mathbf{90.00 \pm 7.58}$ & $\mathbf{91.11 \pm 5.09}$ & $\mathbf{90.56 \pm 7.97}$ & $\mathbf{91.11 \pm 5.93}$ & $\mathbf{88.89 \pm 5.83}$\\
            & S5      & $83.89 \pm 4.44$ & $\mathbf{85.00 \pm 4.51}$ & $83.33 \pm 5.83$ & $82.78 \pm 3.24$ & $\mathbf{85.56 \pm 2.72}$ & $\mathbf{84.44 \pm 5.15}$ & $82.22 \pm 4.51$\\
            & LinOSS & $70.00 \pm 31.93$ & $\mathbf{94.44 \pm 3.04}$ & $\mathbf{91.67 \pm 4.65}$ & $\mathbf{88.33 \pm 16.52}$ & $\mathbf{93.33 \pm 5.72}$ & $\mathbf{77.78 \pm 31.38}$ & $\mathbf{86.67 \pm 13.77}$\\
            & LrcSSM  & $80.00 \pm 7.19$ & $\mathbf{82.78 \pm 4.12}$ & $\mathbf{82.78 \pm 6.02}$ & $\mathbf{82.78 \pm 2.32}$ & $\mathbf{83.89 \pm 4.56}$ & $\mathbf{82.22 \pm 1.52}$ & $\mathbf{84.44 \pm 5.76}$\\
        \midrule
        \multirow{4}{*}{SCP1}
            & LRU    & $86.59 \pm 4.74$ & $\underline{86.35 \pm 3.12}$ & $85.18 \pm 2.42$ & $\mathbf{87.06 \pm 3.87}$ & $85.88 \pm 3.79$ & $\mathbf{87.29 \pm 4.30}$ & $84.94 \pm 3.75$\\
            & S5      & $88.94 \pm 1.19$ & $87.76 \pm 2.53$ & $86.35 \pm 2.74$ & $\underline{88.47 \pm 2.92}$ & $86.35 \pm 4.25$ & $87.06 \pm 3.72$ & $86.35 \pm 3.98$\\
            & LinOSS & $85.18 \pm 4.12$  & $\mathbf{87.06 \pm 1.29}$ & $\mathbf{85.41 \pm 3.98}$ & $\mathbf{86.12 \pm 5.92}$  & $84.94 \pm 1.37$ & $\mathbf{86.59 \pm 2.64}$ & $\underline{85.18 \pm 3.98}$\\
            & LrcSSM &  $84.76 \pm 1.00$ & $\mathbf{85.71 \pm 4.37}$ & $83.33 \pm 4.61$ & $\underline{84.52 \pm 2.23}$ & $\mathbf{85.00 \pm 1.99}$ & $\mathbf{85.00 \pm 1.06}$ & $\mathbf{85.48 \pm 2.84}$\\
        \midrule
        \multirow{4}{*}{SCP2}
            & LRU    &  $55.09 \pm 3.61$ & $\mathbf{56.14 \pm 1.92}$ & $53.33 \pm 7.33$ & $\mathbf{56.84 \pm 2.38}$ & $52.98 \pm 6.12$ & $\mathbf{56.84 \pm 1.79}$ & $54.74 \pm 10.01$\\
            & S5      & $55.09 \pm 3.94$ & $53.68 \pm 3.06$ & $53.33 \pm 2.38$ & $\underline{55.09 \pm 3.06}$ & $52.63 \pm 2.48$ & $\mathbf{55.44 \pm 2.85}$ & $54.74 \pm 5.81$\\
            & LinOSS & $56.49 \pm 5.59$ & $51.93 \pm 1.79$ & $51.23 \pm 5.70$ & $54.04 \pm 6.11$ & $54.39 \pm 5.32$  & $\mathbf{57.89 \pm 5.55}$ & $52.28 \pm 2.05$\\
            & LrcSSM  & $51.07 \pm 1.60$ & $50.71 \pm 8.24$ & $\mathbf{55.00 \pm 8.51}$ & $\mathbf{54.29 \pm 9.83}$ & $\mathbf{55.36 \pm 9.11}$ & $\mathbf{53.21 \pm 7.41}$ & $\mathbf{52.50 \pm 5.30}$\\
        \bottomrule
    \end{tabular}
    }
\end{table}

\subsection{Additional Experiments on Input Reshaping}
We also explored different concentration parameters $c$ across datasets using the LrcSSM model, with results averaged over 5 seeds, reported in Table~\ref{tab:inputdim_comparison}. For medium-sized input datasets, $c=8$ and $c=16$ introduce unnecessary feature- and time-mixing, simply doubling the information density per timestep proves more beneficial. These results are reported in Table~\ref{tab:input_reshape_medium}.
Overall, no substantial improvements over the baseline model were observed in those cases. For low-dimensional
(Ethanol) and high-dimensional (Heartbeat, Motor) datasets, $c=8$ yields substantially better
results. We also experimented with clipping the trailing values when the $T\cdot d$ is not
divisible by $c$, but this showed comparable performance, so we opted to retain all values and
apply zero-padding instead.
\begin{table}[h]
\centering
\caption{Accuracy ($\%$) of different concentration $c$ parameters. }
\label{tab:inputdim_comparison}
\begin{tabular}{lcc}
\toprule
Dataset & $c=8$ & $c=16$ \\
\midrule
Ethanol    & $39.21 \pm 3.00$ & $37.63 \pm 2.56$ \\
Worms       & $78.33 \pm 6.92$ & $78.89 \pm 82.4$ \\
SCP1       & $79.29 \pm 4.88$ & $80.48 \pm 2.47$ \\
SCP2       & $48.57 \pm 4.96$ & $47.14 \pm 7.21$ \\
Heartbeat  & $77.67 \pm 6.08$ & $75.00 \pm 4.56$ \\
Motor      & $57.14 \pm 8.66$ & $54.29 \pm 4.82$ \\
\bottomrule
\end{tabular}
\end{table}
%%%%%%%%%%%%%%%%%%%%%%%%%%%%%%%%%%%%%%%%%%%%%%%%%%%%%%%%%%%%

% \newpage
% \input{checklist.tex}

\end{document}